\documentclass{article}

\PassOptionsToPackage{authoryear}{natbib}
\usepackage[preprint]{neurips_2026}

\setcitestyle{authoryear,round}
\usepackage{times}

\usepackage{amsmath,amsfonts,amssymb,amsthm}

\usepackage{graphicx}
\usepackage{booktabs}
\usepackage{subcaption}

\usepackage{microtype}
\usepackage{hyperref}
\usepackage{xcolor}
\usepackage{enumitem}
\usepackage{epigraph}
\newcommand{\draft}[1]{}

\newtheorem{theorem}{Theorem}

\title{The Curse and Blessing of Mean Bias in FP4-Quantized LLM Training}

\author{%
\textbf{Hengjie Cao}$^{1,3}$\thanks{Co-first authors.} \hspace{1.0em}
\textbf{Zhendong Huang}$^{1}$\footnotemark[1] \hspace{1.0em}
\textbf{Mengyi Chen}$^{1}$ \hspace{1.0em}
\textbf{Yifeng Yang}$^{1}$ \hspace{1.0em}
\textbf{Fang Dong}$^{1}$ \hspace{1.0em}
\\
\textbf{Anrui Chen}$^{1}$ \hspace{1.0em}
\textbf{Ruijun Huang}$^{1}$ \hspace{1.0em}
\textbf{Xin Zhang}$^{1}$ \hspace{1.0em}
\textbf{Mingzhi Dong}$^{2}$ \hspace{1.0em}
\textbf{Yujiang Wang}$^{4,5}$ \hspace{1.0em}
\\
\textbf{Jinlong Hou}$^{3}$ \hspace{1.0em}
\textbf{Qin Lv}$^{6}$ \hspace{1.0em}
\textbf{Robert P. Dick}$^{7}$ \hspace{1.0em}
\textbf{Yuan Cheng}$^{3}$ \hspace{1.0em}
\textbf{Tun Lu}$^{1}$ \hspace{1.0em} 
\\
\textbf{Fan Yang}$^{1}$ \hspace{1.0em}
\textbf{Yixuan Chen}$^{4}$ \hspace{1.0em}
\textbf{Li Shang}$^{1,8}$ \hspace{1.0em}
\\[0.5em]
$^1$ Fudan University \hspace{1.0em}
$^2$ University of Bath \hspace{1.0em}
$^3$ Shanghai Innovation Institute \hspace{1.0em}\\
$^4$ University of Oxford \hspace{1.0em}
$^5$ Oxford Suzhou Centre for Advanced Research \hspace{1.0em}  \\
$^6$ University of Colorado Boulder \hspace{1.0em}
$^7$ University of Michigan \hspace{1.0em}
$^8$ Shenzhen Loop Area Institute \hspace{1.0em}
}

\begin{document}

\maketitle

\begin{abstract}
FP4 training promises substantial memory and compute savings for large language models, but remains fragile because blockwise quantization is dictated by extreme activation magnitudes, which inflate dynamic range and compress long-tail signals. We identify a counterintuitive source of this failure: dominant activation outliers are not merely arbitrary sparse events, but are largely induced by a coherent rank-one mean bias, whose direction aligns with the leading anisotropic spectral component.
This mean component strengthens during training, is amplified and reshaped by attention and FFN operators, and increasingly dominates top activation magnitudes.
Crucially, this discovery reveals that a seemingly complex outlier-suppression problem admits a truly simple solution: isolate the coherent mean before quantization. We therefore propose Averis, a mean–residual splitting quantization method that separates the mean component using only reductions and elementwise subtractions before FP4 quantization.
Across Qwen3-0.6B Dense trained on 100B tokens and Qwen3-7B-A1.5B MoE trained on 50B tokens, Averis enables robust W4A4G4 FP4 training, reducing BF16 loss gaps to 1.19\%/0.81\% versus 2.05\%/1.10\% for NVIDIA's recently released Hadamard-based outlier-smoothing method, while limiting downstream gaps to 0.89/0.71 points. With only 2.20\% end-to-end overhead over vanilla NVFP4, about 30\% of NVIDIA's Hadamard-based design, Averis provides a hardware-efficient path to stable low-bit LLM training. Complementary to Hadamard, Averis further reduces the Qwen3-0.6B loss and downstream gaps to 0.94\% and 0.73 points when combined. Code is available at: https://anonymous.4open.science/r/averis-504D.

\end{abstract}


\section{Introduction}
Training large language models (LLMs) with low-bit quantization of parameters, activations, and gradients offers substantial gains in memory footprint, and computational efficiency. In the FP4 regime, NVIDIA's NVFP4 format reduces memory consumption by 1.8$\times$ and accelerates General Matrix Multiplications (GeMMs) by 7$\times$ compared with FP8~\citep{alvarez2025nvfp4_inference, devleker2025nvfp4}. In our end-to-end Qwen3-8B~\citep{yang2025qwen3} training profile on B300 GPUs, FP8 and NVFP4 achieve $1.38\times$ and $1.61\times$ per-step speedups, respectively, over the BF16 profile. However, pushing the training frontier down to FP4 often comes with training-loss gaps and downstream performance degradation~\citep{wang2025fp4training,chmiel2025fp4all,tseng2025mxfp4,cao2025metis,ashkboos2024quarot,li2024svdquant}.

The central challenge in FP4 training is the presence of extreme outlier magnitudes in training tensors, especially activation matrices that participate in both forward and backward GeMMs~\citep{ashkboos2024quarot,ashkboos2025halo}. Under blockwise quantization, a few activation outliers can set the block scale, inflate the dynamic range, and compress the long-tail signal of ordinary entries into narrow numerical bins, degrading both forward representation fidelity and backward gradient propagation~\citep{xiao2023smoothquant,ashkboos2024quarot,li2024svdquant,cao2025metis}. Existing mitigation strategies mainly fall into two categories. One operates in the element space, typically using tiled Hadamard transforms to smooth outliers within each tile \citep{ashkboos2024quarot, ashkboos2025halo}, but with limited ability to contract the overall dynamic range and eliminate dominant outliers. 
The other operates in the spectral space, relying on explicit spectral control such as singular value decomposition (SVD), as in Metis~\citep{cao2025metis}, which typically achieves lower quantization error but remains computationally intensive and poorly aligned with modern accelerator hardware.


We identify a counterintuitive structural source of FP4 training instability: dominant activation outliers are not merely arbitrary sparse events, but are largely induced by a coherent rank-one mean bias. This bias corresponds to the feature-wise mean vector of the activation matrix, forms a shared offset across tokens, and aligns with the leading spectral component. Empirically, the mean component strengthens as training progresses while maintaining this spectral alignment. Operator-level analysis further shows that attention and FFN modules actively amplify and reshape the coherent mean component.



We further show that this rank-one mean bias is the dominant source of extreme activation magnitudes. Empirically, in deep layers the top-0.1\% activation entries shift from residual-dominated at early training to mean-dominated at late training, with their median squared mean-share surging from initially negligible levels to approximately 95\%. Theoretically, under an empirically validated Gaussian residual model, a columnwise mean that is large relative to the residual variance exponentially amplifies far‑tail exceedance probabilities relative to zero‑mean fluctuations, providing a mathematical account of how the largest activations become mean-driven as the bias strengthens.




Crucially, this discovery reveals that a seemingly complex outlier-suppression problem admits a truly simple solution: isolate the coherent mean before quantization. We therefore propose \emph{Averis} (Averaging-Induced Residual Splitting), which explicitly factors activations and output gradients into column-mean and residual components, and quantizes them separately in both forward and backward GeMMs. By removing the dominant bias-induced dynamic range from the residual stream, Averis reduces quantization error using only mean reductions and elementwise subtractions.

Averis enables robust W4A4G4 training by quantizing all GeMM matrices to FP4. Across Qwen3-0.6B Dense trained on 100B tokens and Qwen3-7B-A1.5B MoE trained on 50B tokens, Averis reduces the BF16 loss gaps to 1.19\% and 0.81\%, compared with 2.05\% and 1.10\% for NVIDIA's recently released Hadamard-based outlier-smoothing method. The gains also translate to downstream performance, where Averis limits the accuracy gaps to 0.89 and 0.71 points under NVFP4 forward evaluation. On NVIDIA Blackwell GPUs, Averis incurs only 2.20\% end-to-end overhead over vanilla NVFP4, about 30\% of the Hadamard-based design, and delivers up to $1.57\times$ end-to-end speedup over BF16. Moreover, Averis is complementary to Hadamard transformation: on Qwen3-0.6B, their combination further reduces the loss and downstream gaps to 0.94\% and 0.73 points. Overall, Averis demonstrates a practical and hardware-efficient path to stable FP4 LLM training.

\section{Analysis}

\subsection{Mean Bias Phenomenon}
\label{sec:mean_bias}

\textbf{Notation.}
Let $\mathcal{X}\in\mathbb{R}^{b\times s\times m}$ denote an activation tensor, where
$b$ is the batch size, $s$ is the sequence length, and $m$ is the hidden dimension.
For GeMM-level analysis, we reshape $\mathcal{X}$ into a matrix
$\mathbf{X}\in\mathbb{R}^{l\times m}$ with $l=bs$.
We write the singular value decomposition of $\mathbf{X}$ as
$\mathbf{X}=\sum_{k=1}^{r}\sigma_k \mathbf{u}_k\mathbf{v}_k^\top$,
where $r=\min(l,m)$ and $\sigma_1\ge \sigma_2\ge \cdots \ge \sigma_r>0$.
We define the feature-wise mean vector as
$\boldsymbol{\mu}_{\mathbf X}\triangleq \frac{1}{l}\mathbf{X}^\top \mathbf{1}\in\mathbb{R}^m$, where $\mathbf{1}\in\mathbb{R}^l$ denotes the all-ones vector,
the corresponding rank-one mean matrix as
$\mathbf{M}_{\mathbf X}\triangleq \mathbf{1}\boldsymbol{\mu}_{\mathbf X}^\top\in\mathbb{R}^{l\times m}$,
and the centered activation matrix as
$\widetilde{\mathbf X}\triangleq \mathbf X-\mathbf M_{\mathbf X}$.
We also define the normalized all-ones direction
$\mathbf e\triangleq \mathbf 1/\sqrt l\in\mathbb{R}^l$
and the alignment coefficients
$\beta_k\triangleq \langle \mathbf u_k,\mathbf e\rangle$.

\textbf{Mean bias: a common directional shift in representation space.}
We define \emph{mean bias} as a coherent shift of token representations toward a shared direction. Figure~\ref{fig:mean_bias_definition}(B) illustrates this phenomenon using the activations from the deepest layer of Qwen3-0.6B dense model at late stage: token-wise cosine similarities with the mean direction are almost uniformly positive, whereas those with a non-mean direction, chosen as the second right singular vector $\mathbf v_2$, alternate in sign. This contrast indicates that the mean direction captures a shared representation-level offset rather than ordinary variance. The same pattern holds across additional layers and training stages, as shown in Appendix~\ref{app:mean_bias_more_layers}.

\textbf{Mean bias aligns with the dominant anisotropic spectral direction.}
We further find that this coherent mean direction is tightly coupled to the leading spectral spike. Substituting the SVD of $\mathbf X$ into the definition of $\boldsymbol{\mu}_{\mathbf X}$ gives
$\boldsymbol{\mu}_{\mathbf X}
=
\frac{1}{\sqrt l}\sum_{k=1}^{r}\sigma_k \beta_k \mathbf v_k .$
This expansion shows that the mean direction is controlled jointly by spectral anisotropy, through the singular values $\sigma_k$, and by token-wise coherence, through the alignment coefficients $\beta_k$. In LLM activations, the spectrum is strongly anisotropic, with a dominant leading singular value as shown in Figure~\ref{fig:mean_bias_definition}(A). When this leading mode also has large alignment with the all-ones token direction, while non-leading modes exhibit stronger sign cancellation, the mean vector concentrates along $\mathbf v_1$. Consistently, Figure~\ref{fig:mean_bias_definition}(C) shows that $\boldsymbol{\mu}_{\mathbf X}$ has much larger cosine similarity with $\mathbf v_1$, approaches $0.99$, than with other right singular directions. 

\begin{figure}[h]
\centering
\includegraphics[width=\linewidth]{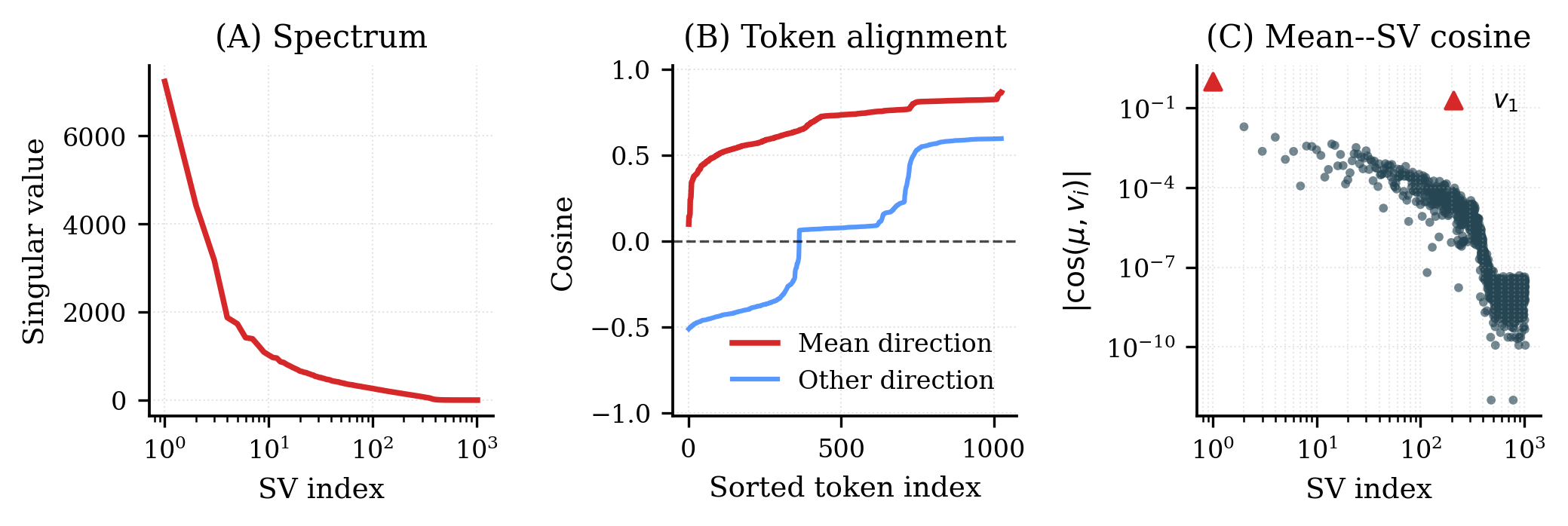}
\caption{Mean-bias evidence in layer-27 FFN-input activations of Qwen3-0.6B at step 170k. (A) A dominant leading singular value indicates strong anisotropy. (B) Token-wise cosine similarities stay mostly positive along the mean direction but are weaker and more mixed along a non-leading direction. (C) The mean vector aligns most strongly with the top right singular vector.}
\label{fig:mean_bias_definition}
\end{figure}

\subsection{Structural Emergence of Mean Bias}
\label{sec:mean_bias_emergence}

We examine the evolution of \emph{mean bias} across training. Globally, the feature-wise mean strengthens over training while staying aligned with the leading anisotropic direction; locally, attention and FFN modules amplify and reshape this coherent component within Transformer blocks.

\textbf{Empirical Trajectory: Mean Bias Strengthens During Training.}
We quantify the strength of mean bias by the normalized ratio
$R \triangleq \frac{\|\boldsymbol{\mu}_{\mathbf X}\|_2}{\sqrt{\|\mathbf X\|_F^2/l}}$.
Using activation matrices from Qwen3-0.6B at different training stages, we track $R$ and the cosine alignment between the mean vector and the top right singular vector across layers.
As shown in Figure~\ref{fig:energy_grid}, the mean-bias energy generally increases as training progresses, most prominently in early and middle layers.
Meanwhile, the mean vector remains strongly aligned with the top singular direction across layers and stages, often approaching cosine similarity $1.0$.
These trends show that mean bias is progressively strengthened during training while remaining a structured rank-one component coupled to the dominant anisotropic direction.

\begin{figure*}[h]
\centering
\includegraphics[width=\linewidth]{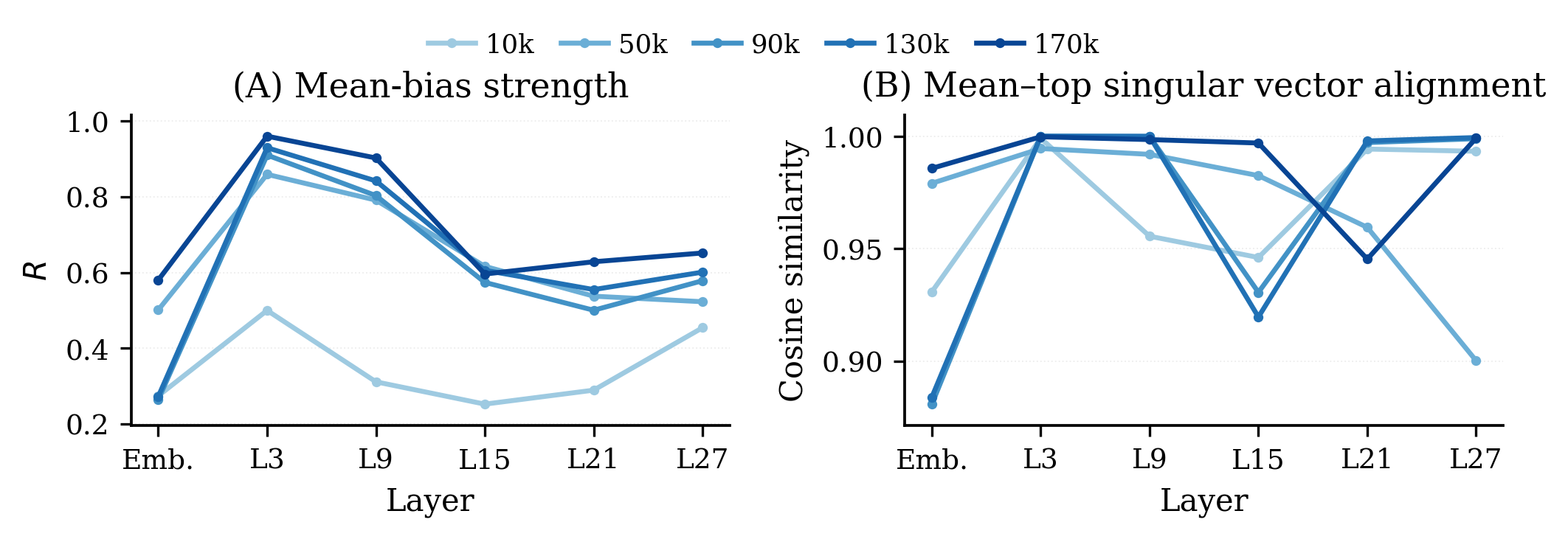}
\caption{Evolution of mean bias across depth and training in Qwen3-0.6B activations. Left: the normalized mean-bias ratio $R$ increases with training step. Right: the growing mean component remains tightly aligned with the dominant anisotropic direction.}
\label{fig:energy_grid}
\end{figure*}


 
\textbf{Operator-Level Regeneration and Amplification.}
Transformer operators actively amplify mean bias, while also reshaping its direction across training stages. We verify this by tracing the mean-bias ratio $R$ across operator stages and measuring adjacent-stage mean-vector cosine. Figure~\ref{fig:operator_change}(A) shows that, at the early checkpoint, both attention and FFN increase $R$, with the FFN also substantially redirecting the mean direction. At the late checkpoint, Figure~\ref{fig:operator_change}(B) shows that amplification is mainly concentrated in the FFN path, while attention contributes more to directional reshaping. 

\begin{figure}[h]
\centering
\includegraphics[width=\linewidth]{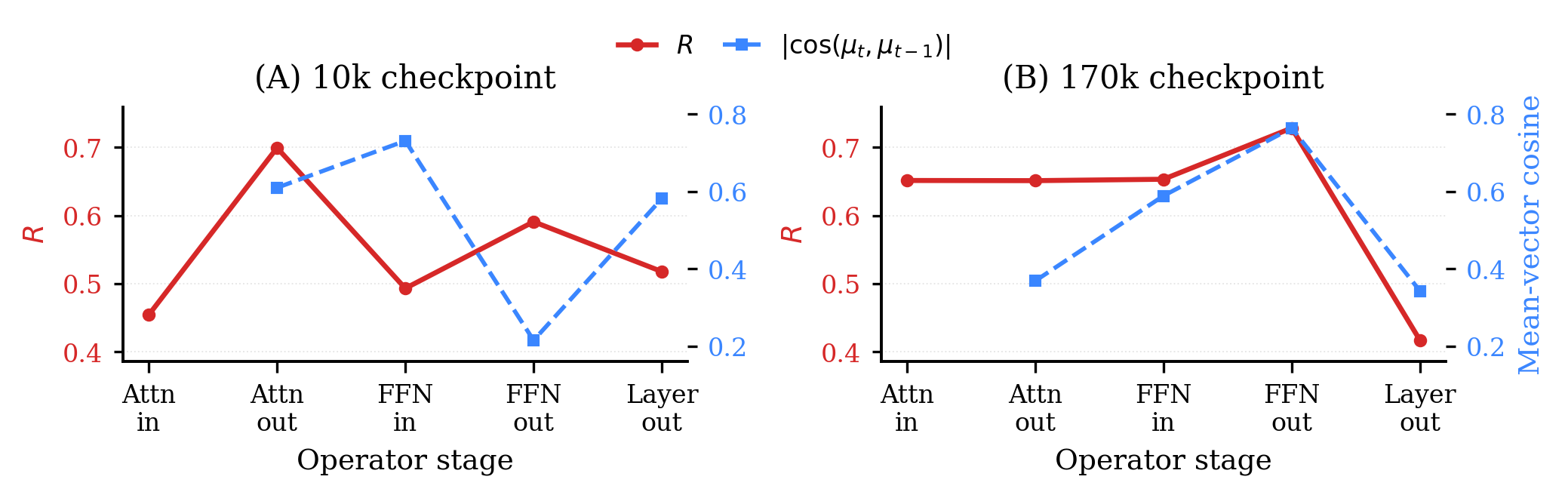}
\caption{Operator-level evidence for mean-bias amplification. We compare the evolution of mean-bias strength and mean-direction alignment across key operators, showing that nonlinear Transformer components further strengthen the coherent mean component during forward propagation.}
\label{fig:operator_change}
\end{figure}


\subsection{Mean Bias as the Dominant Source of Activation Outliers}
\label{sec:mean_outliers}



We explain the sharp numerical consequence of mean bias: it produces extreme elementwise activations that set blockwise quantization scales, inflate dynamic range, and compress ordinary long-tail signals into narrow numerical bins.


\textbf{Empirical outlier attribution.}
To localize the source of extreme activation values, let $\mathcal E_{\mathrm{top}}$
denote the index set of the top-$0.1\%$ entries of $\mathbf X$ ranked by absolute
magnitude. For each $(i,j)\in\mathcal E_{\mathrm{top}}$, we measure the
component-wise squared contribution proxies $\rho_{ij}^{(\mathrm{mean})}
\triangleq
\frac{(\mathbf M_{\mathbf X})_{ij}^{2}}{X_{ij}^{2}},
\qquad
\rho_{ij}^{(\mathrm{res})}
\triangleq
\frac{\widetilde X_{ij}^{2}}{X_{ij}^{2}},$
which serve as empirical indicators of whether extreme entries are dominated by the coherent mean component or by the residual.
Figure~\ref{fig:outlier_shares} reports this attribution for activations of Qwen3-0.6B at early and late checkpoints, comparing a shallow
layer with a deep layer. At the early checkpoint, top outliers are largely
residual-dominated. As training progresses, the mean contribution shifts toward larger values, where late-stage outliers exhibit a substantial concentration of high mean-share values. 

\begin{figure}[h]
\centering
\includegraphics[width=\linewidth]{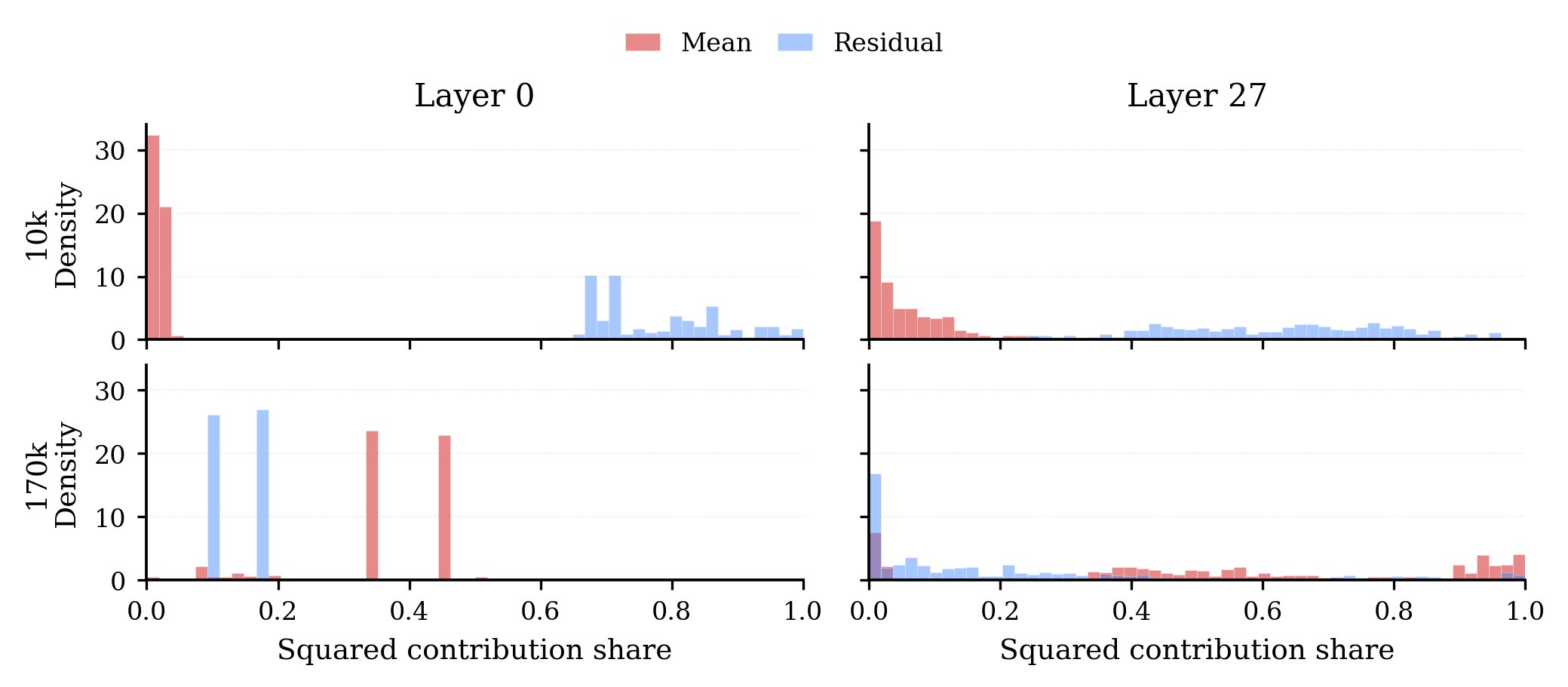}
\caption{Histograms of activation outlier mean and residual contribution shares (top 0.1\% entries) across layers and training steps.}
\label{fig:outlier_shares}
\end{figure}

\textbf{Theoretical model: mean-bias amplification of columnwise outliers.}
We now formalize this effect at the level of a single feature coordinate. For a
fixed column $j$, let
\begin{equation}
m_j \triangleq (\boldsymbol{\mu}_{\mathbf X})_j
=
\frac{1}{\sqrt l}\sum_{k=1}^{r}\sigma_k\beta_k v_{kj},
\label{eq:full_column_mean}
\end{equation}
We also denote the leading-mode contribution to this column mean by
$m_j^{(1)}
\triangleq
\frac{\sigma_1\beta_1}{\sqrt l}v_{1j}.
\label{eq:top_mode_column_mean}$

Let $I\sim\mathrm{Unif}(\{1,\dots,l\})$ and define the row-sampled entry
\[
Y_j \triangleq X_{Ij}=m_j+\eta_{Ij},
\qquad
\eta_{Ij}\triangleq X_{Ij}-m_j .
\]
Thus $\eta_{Ij}$ is the centered residual obtained by sampling one token position
from column $j$.

We use the following empirically verified assumptions.

\smallskip
\noindent\textbf{Assumption 1: Marginal Gaussian residual.}
For each column $j$, the row-sampled centered residual is approximated by a
Gaussian marginal distribution,
\begin{equation}
\eta_{Ij}\ \dot\sim\ N(0,\tau_j^2).
\label{eq:assumption_gaussian_residual}
\end{equation}
This assumption concerns only the marginal distribution of a randomly sampled
coordinate and does not require the residuals
$\{\eta_{ij}\}_{i=1}^{l}$ to be mutually independent. Empirically, Figure~\ref{fig:gaussian_residual}
shows that the raw activation distribution deviates strongly from a Gaussian fit,
whereas the residual after mean removal is substantially closer to Gaussian.

\begin{figure}[h]
\centering
\includegraphics[width=\linewidth]{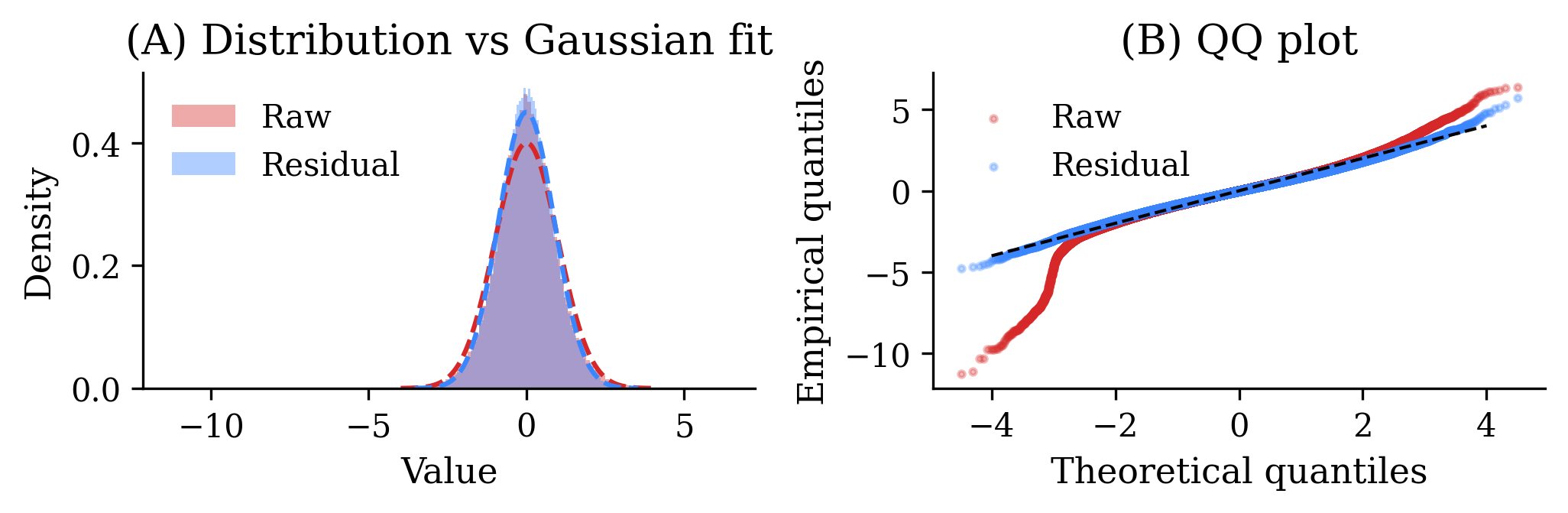}
\caption{Empirical validation of the marginal Gaussian residual assumption.
Raw activations deviate from Gaussianity, while residuals are much
closer to a Gaussian fit in both density and QQ plots.}
\label{fig:gaussian_residual}
\end{figure}

\smallskip
\noindent\textbf{Assumption 2: Diagonal variance approximation.}
The residual variance of column $j$ is well approximated by its diagonal spectral
contribution:
\begin{equation}
\tau_j^2
\approx
\tau_{j,\mathrm{diag}}^2
\triangleq
\frac{1}{l}
\sum_{k=1}^{r}
\sigma_k^2(1-\beta_k^2)v_{kj}^2 .
\label{eq:assumption_diagonal_variance}
\end{equation}
Equivalently, cross-mode covariance terms contribute little to the row-sampling
variance. This approximation is empirically validated in Appendix~\ref{app:variance_decomp},
which shows that the diagonal estimate closely tracks the empirical residual
variance and that the relative cross-term contribution remains small.

\smallskip
\noindent\textbf{Assumption 3: Mean-dominant aligned columns.}
For columns that contribute to large outliers, the coherent mean dominates the
residual scale, $\frac{|m_j|}{\tau_j}\gg 1,
\label{eq:assumption_mean_dominance}$
and the column mean is mainly induced by the leading aligned spectral mode, $|m_j-m_j^{(1)}|\ll |m_j^{(1)}|.
\label{eq:assumption_aligned_spike}$
This condition is favored by the empirical structure established in
Section~\ref{sec:mean_bias}: the leading singular value is dominant, the leading
left singular vector is strongly aligned with the all-ones token direction, and the mean vector is concentrated along the top right singular vector.

\smallskip
\begin{theorem}[Mean-bias amplification of columnwise outliers]
\label{thm:mean_bias_tail}
Fix a column $j$ and let $Y_j=X_{Ij}=m_j+\eta_{Ij}$ with
$I\sim\mathrm{Unif}(\{1,\dots,l\})$. Suppose Assumption~1 holds exactly with
$\eta_{Ij}\sim N(0,\tau_j^2)$. Then, for any threshold $t>0$,
\begin{equation}
\mathbb P(|Y_j|>t)
=
Q\!\left(\frac{t-|m_j|}{\tau_j}\right)
+
Q\!\left(\frac{t+|m_j|}{\tau_j}\right),
\label{eq:tail_formula}
\end{equation}
where $Q(x)=1-\Phi(x)$ and $\Phi$ is the standard normal CDF.

Let $Y_j^{(0)}\sim N(0,\tau_j^2)$ be the corresponding zero-mean variance-only
baseline. In the high-threshold regime
\begin{equation}
t>|m_j|,
\qquad
\frac{t-|m_j|}{\tau_j}\to\infty,
\qquad
\frac{t|m_j|}{\tau_j^2}\to\infty,
\label{eq:far_tail_conditions}
\end{equation}
the tail probability satisfies
\begin{equation}
\mathbb P(|Y_j|>t)
\sim
Q\!\left(\frac{t-|m_j|}{\tau_j}\right),
\label{eq:tail_asymptotic}
\end{equation}
and its amplification relative to the variance-only baseline is
\begin{equation}
\frac{\mathbb P(|Y_j|>t)}
{\mathbb P(|Y_j^{(0)}|>t)}
\sim
\frac{t}{2(t-|m_j|)}
\exp\!\left(
\frac{2t|m_j|-m_j^2}{2\tau_j^2}
\right).
\label{eq:tail_ratio}
\end{equation}

\end{theorem}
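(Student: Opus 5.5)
The argument has three parts: an exact computation of the two-sided Gaussian tail, a Mills-ratio comparison of the two terms, and a second Mills-ratio evaluation for the ratio against the zero-mean baseline.

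\textbf{Exact formula.} Since $Y_j=m_j+\eta_{Ij}$ with $\eta_{Ij}\sim N(0,\tau_j^2)$, the event $\{|Y_j|>t\}$ splits into $\{\eta>t-m_j\}$ and $\{\eta<-t-m_j\}$. By symmetry of the centered Gaussian,
\[
\mathbb P(|Y_j|>t)=Q\!\left(\frac{t-m_j}{\tau_j}\right)+Q\!\left(\frac{t+m_j}{\tau_j}\right).
\]
This expression is invariant under $m_j\mapsto -m_j$, so we may replace $m_j$ by $|m_j|$. That gives \eqref{eq:tail_formula}.

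\textbf{Asymptotic equivalence \eqref{eq:tail_asymptotic}.} Set $a=(t-|m_j|)/\tau_j$ and $b=(t+|m_j|)/\tau_j$. Then $b>a\to\infty$, so the Mills-ratio asymptotic $Q(x)\sim \phi(x)/x$ applies to both terms. Their ratio is
\[
\frac{Q(b)}{Q(a)}\sim \frac{a}{b}\,e^{-(b^2-a^2)/2}\le e^{-2t|m_j|/\tau_j^2}\to 0,
\]
where we used $b^2-a^2=4t|m_j|/\tau_j^2$ and the condition $t|m_j|/\tau_j^2\to\infty$. Hence the second term is negligible and $\mathbb P(|Y_j|>t)\sim Q(a)$.

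\textbf{Amplification ratio \eqref{eq:tail_ratio}.} For the baseline, $\mathbb P(|Y_j^{(0)}|>t)=2Q(t/\tau_j)$. Since $t/\tau_j\ge a\to\infty$, Mills' ratio gives $2Q(t/\tau_j)\sim 2\phi(t/\tau_j)\,\tau_j/t$. Dividing,
\[
\frac{Q(a)}{2Q(t/\tau_j)}\sim \frac{t}{2(t-|m_j|)}\exp\!\left(\frac{t^2-(t-|m_j|)^2}{2\tau_j^2}\right),
\]
and the exponent equals $(2t|m_j|-m_j^2)/(2\tau_j^2)$, as claimed.

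\textbf{Main obstacle.} The only delicate point is that we are in a joint limit along a family of parameters $(t,m_j,\tau_j)$, not a single variable going to infinity. Asymptotic equivalences therefore have to be justified uniformly. I would handle this with the explicit two-sided Mills bounds
\[
\frac{x}{1+x^2}\,\phi(x)\le Q(x)\le \frac{\phi(x)}{x}, \qquad x>0.
\]
These give $Q(x)=\frac{\phi(x)}{x}\bigl(1+O(x^{-2})\bigr)$ with an absolute constant. Applied with $x=a$, $b$, and $t/\tau_j$, all of which go to infinity, the relative errors vanish regardless of how the parameters move.

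For the negligibility step, the bound $a/b\le 1$ combined with the exponential factor suffices. No control of $a/b$ beyond this is needed.
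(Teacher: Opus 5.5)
Your proposal is correct and follows essentially the same route as the paper's proof (Steps 4--6 of the appendix). You split the two-sided event to get the exact Gaussian tail, show via Mills' ratio and $b^2-a^2=4t|m_j|/\tau_j^2$ that the $Q\!\left(\frac{t+|m_j|}{\tau_j}\right)$ term is negligible, and apply Mills' ratio again to $Q(a)$ and $2Q(t/\tau_j)$ to get the amplification factor; your explicit two-sided Mills bounds for the joint limit in $(t,m_j,\tau_j)$ make rigorous what the paper does only with $\approx$ and the one-variable asymptotic.
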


\noindent\textit{Proof.} Deferred to Appendix~\ref{appendix:proof_mean_bias_tail}.


Consequently, when $|m_j|/\tau_j$ is large, high-threshold exceedances are exponentially amplified by the coherent mean shift. If Assumption~3 also holds, this amplification is controlled by the rank-one mean-bias component induced by the leading aligned mode,
$\frac{\sigma_1\beta_1}{\sqrt l}\mathbf 1\mathbf v_1^\top$.
This theorem provides a concrete mechanism for the empirical outlier attribution observed above: after mean removal, the remaining residual fluctuations are well modeled by a Gaussian marginal, while a nonzero column mean shifts the tail by $|m_j|$ and thereby increases high-threshold exceedance probabilities exponentially relative to a zero-mean variance-matched baseline. 

We further verify this implication empirically in Appendix~\ref{app:residual_outlier_distribution}.
For late-stage activations in both shallow and deep layers, subtracting the feature-wise mean substantially contracts the high-magnitude tail of the value distribution. 


\section{Method}
\label{sec:lowbit_method}
Building on the insight above, we identify a simple design principle for low-bit training: since quantization-sensitive outliers are predominantly driven by a coherent rank-one mean component, it is sufficient to isolate this component before quantization. We propose \emph{Averis} (Averaging-Induced Residual Splitting), a mean--residual quantization method that explicitly separates the column-mean component from the residual and quantizes them independently. Averis applies this decomposition to activations and output gradients in the forward and backward GeMMs; although output gradients show weaker mean bias, mean centering still slightly reduces quantization error, as verified in Appendix~\ref{app:gradout_mean_centering}. It requires only mean reductions,  remaining well aligned with efficient accelerator execution.

\textbf{Notation and quantization operator.}
Let $\mathcal{Q}_b(\cdot)$ denote a $b$-bit quantization operator (e.g., blockwise FP4) applied to a matrix.
We write $\bar{\mathbf{M}}\triangleq \mathcal{Q}_b(\mathbf{M})$ for the quantized version of any matrix
$\mathbf{M}$.
Throughout, for any positive integer $d$, we write $\mathbf{1}_d\in\mathbb{R}^{d}$ for the all-ones column vector of length $d$.

\textbf{Forward pass: activation mean--residual splitting.}
Let $\mathbf{W}\in\mathbb{R}^{m\times n}$ be a weight matrix and let
$\mathcal{X}\in\mathbb{R}^{b\times s\times m}$ be an input activation tensor, where $b$ is batch size,
$s$ is sequence length, and $m$ is hidden dimension.
As usual, we reshape $\mathcal{X}$ into a matrix $\mathbf{X}\in\mathbb{R}^{l\times m}$ with $l=b\cdot s$
and compute the GeMM output $\mathbf{Y}=\mathbf{X}\mathbf{W}\in\mathbb{R}^{l\times n}.$

We define the \emph{activation column-mean vector} $\mu_{\mathbf{X}}\in\mathbb{R}^{m}$ by $\mu_{\mathbf{X}} \triangleq \frac{1}{l}\mathbf{1}_l^{\top}\mathbf{X},$
The residual is $\mathbf{X}_{\mathrm{R}} \triangleq \mathbf{X}-\mathbf{1}_l\,\mu_{\mathbf{X}}.$
We then quantize the mean vector and residual independently:
\[
\bar{\mu}_{\mathbf{X}}=\mathcal{Q}_b(\mu_{\mathbf{X}}),
\qquad
\bar{\mathbf{X}}_{\mathrm{R}}=\mathcal{Q}_b(\mathbf{X}_{\mathrm{R}}),
\qquad
\bar{\mathbf{W}}=\mathcal{Q}_b(\mathbf{W}).
\]
The quantized forward computation is
\begin{equation}
\label{eq:averis_forward}
\hat{\mathbf{Y}}
\;=\;
\mathbf{1}_l\,(\bar{\mu}_{\mathbf{X}}\bar{\mathbf{W}})
\;+
\bar{\mathbf{X}}_{\mathrm{R}}\,\bar{\mathbf{W}}.
\end{equation}
where the first term is broadcast along the token dimension via $\mathbf{1}_l$.
In practice, one computes $\mu_{\mathbf{X}}$ first, subtracts it from $\mathbf{X}$ on the fly to form
$\mathbf{X}_{\mathrm{R}}$, and never materializes the mean matrix.

\textbf{Backward pass: output-gradient mean--residual splitting.}
Let $\mathbf{D}\in\mathbb{R}^{l\times n}$ denote the matrix of loss derivatives with respect to
the GeMM output $\mathbf{Y}$, i.e., $\mathbf{D}\triangleq \frac{\partial \mathcal{L}}{\partial \mathbf{Y}}.$
We define the \emph{output-gradient column-mean vector} $\mu_{\mathbf{D}}\in\mathbb{R}^{n}$ by
\[
\mu_{\mathbf{D}} \triangleq \frac{1}{l}\mathbf{1}_l^{\top}\mathbf{D},
\qquad
\mathbf{D}_{\mathrm{R}} \triangleq \mathbf{D}-\mathbf{1}_l\,\mu_{\mathbf{D}}.
\]
Quantize the components independently:
\[
\bar{\mu}_{\mathbf{D}}=\mathcal{Q}_b(\mu_{\mathbf{D}}),
\qquad
\bar{\mathbf{D}}_{\mathrm{R}}=\mathcal{Q}_b(\mathbf{D}_{\mathrm{R}}).
\]

For the input-gradient GeMM, Averis evaluates
\begin{equation}
\label{eq:averis_backward_dx}
\widehat{\frac{\partial \mathcal{L}}{\partial \mathbf{X}}}
=
\mathbf{1}_l\,(\bar{\mu}_{\mathbf{D}}\bar{\mathbf{W}}^{\top})
+
\bar{\mathbf{D}}_{\mathrm{R}}\bar{\mathbf{W}}^{\top}.
\end{equation}

For the weight-gradient GeMM, the exact mean--residual expansion gives
\[
\frac{\partial \mathcal{L}}{\partial \mathbf{W}}
=
\mathbf{X}^{\top}\mathbf{D}
=
\mathbf{X}_{\mathrm R}^{\top}\mathbf{D}_{\mathrm R}
+
\mathbf{X}_{\mathrm R}^{\top}(\mathbf{1}_l\mu_{\mathbf D})
+
(\mathbf{1}_l\mu_{\mathbf X})^{\top}\mathbf{D}_{\mathrm R}
+
(\mathbf{1}_l\mu_{\mathbf X})^{\top}(\mathbf{1}_l\mu_{\mathbf D}) .
\]
Since $\mathbf{X}_{\mathrm R}$ and $\mathbf{D}_{\mathrm R}$ are column-centered, $\mathbf{X}_{\mathrm R}^{\top}\mathbf{1}_l=0,
\mathbf{1}_l^\top\mathbf{D}_{\mathrm R}=0,$
and therefore the two cross terms vanish: $\mathbf{X}_{\mathrm R}^{\top}(\mathbf{1}_l\mu_{\mathbf D})=0, (\mathbf{1}_l\mu_{\mathbf X})^{\top}\mathbf{D}_{\mathrm R}=0.$
Thus Averis evaluates the quantized weight-gradient GeMM as
\begin{equation}
\label{eq:averis_backward_dw}
\widehat{\frac{\partial \mathcal{L}}{\partial \mathbf{W}}}
=
\bar{\mathbf{X}}_{\mathrm R}^{\top}\bar{\mathbf{D}}_{\mathrm R}
+
l\,\bar{\mu}_{\mathbf X}^{\top}\bar{\mu}_{\mathbf D}.
\end{equation}

\section{Experiments}


\textbf{Models and Datasets.}
We conduct experiments on two model settings: Qwen3-0.6B Dense and Qwen3-7B-A1.5B MoE. The MoE model is a scaled-down variant following Qwen3-235B-A22B.
For both models, we use the DCLM~\citep{li2024datacomp} dataset for pretraining. The Qwen3-0.6B Dense model is trained on 100B tokens, and the Qwen3-7B-A1.5B MoE model is trained on 50B tokens.

\textbf{FP4 Training.}
All low-bit experiments adopt W4A4G4 quantization, where weights, activations,
and gradients are represented in the E2M1 NVFP4 format. Stochastic rounding (SR) is applied to the backward gradient GeMMs to mitigate quantization bias, and is orthogonal to other methods. Training-quality experiments for both model scales are conducted with FP4 simulation on Hopper GPUs, while the end-to-end runtime overhead is evaluated on Blackwell GPUs.

\textbf{Baselines and Evaluation.}
We compare Averis against full-precision BF16 training, vanilla FP4 training, and NVIDIA's recently released Hadamard-based outlier-smoothing design for FP4 training. We evaluate different methods by the training-loss gap relative to BF16 and downstream task performance. For low-precision settings, downstream evaluation is also performed with NVFP4 quantized forward computation. For downstream evaluation, we consider three task types: question answering (ARC~\cite{clark2018think}, RACE~\cite{lai2017race}, BoolQ~\cite{clark2019boolq}), classification (HellaSwag~\cite{zellers2019hellaswag}, PIQA~\cite{bisk2020piqa}), and cloze prediction (LAMBADA~\cite{kazemi2023lambada}).

\subsection{Main Results}

Figure~\ref{fig:loss_curve} compares the training-loss trajectories of Qwen3-0.6B and Qwen3-7B-A1.5B, and Table~\ref{tab:loss_downstream_combined} reports the final statistics. The behavior differs across model scales. On Qwen3-0.6B, Averis substantially narrows the BF16 loss gap, reducing it from 2.70\% under vanilla NVFP4 and 2.05\% under NVFP4-Hadamard to 1.19\%. On Qwen3-7B-A1.5B, all reported low-bit variants remain close to BF16, with loss gaps of 1.03\% for vanilla NVFP4, 1.10\% for NVFP4-Hadamard, and 0.81\% for Averis. These results show that mean-bias isolation is especially effective at the 0.6B scale, while the larger model exhibits a more nuanced trade-off among low-bit recipes.

\begin{figure*}[h]
\centering
\begin{subfigure}{0.48\textwidth}
    \centering
    \includegraphics[width=\linewidth]{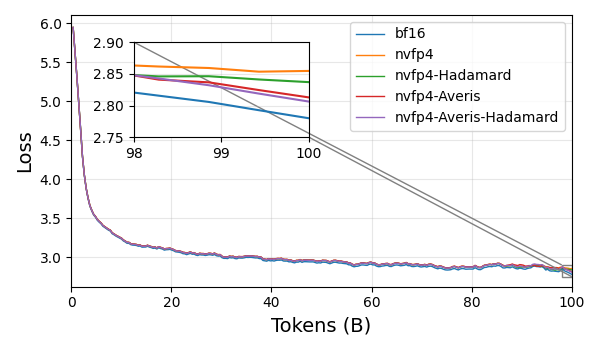}
    \caption{Training-loss curve for Qwen3-0.6B.}
    \label{fig:loss_curve_06b}
\end{subfigure}\hfill
\begin{subfigure}{0.48\textwidth}
    \centering
    \includegraphics[width=\linewidth]{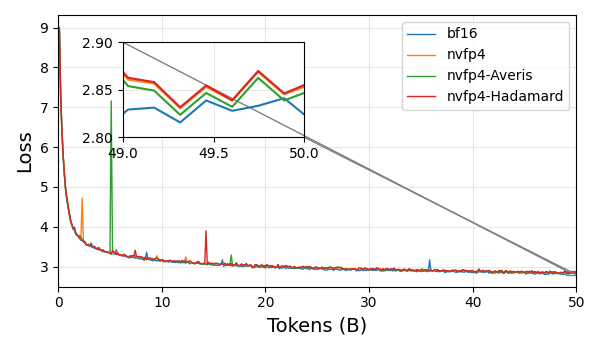}
    \caption{Training-loss curve for Qwen3-7B-A1.5B.}
    \label{fig:loss_curve_7b}
\end{subfigure}
\caption{Training-loss curves across model scales. Left: Qwen3-0.6B. Right: Qwen3-7B-A1.5B. We compare BF16, vanilla NVFP4, NVFP4-Hadamard, Averis, and Averis-Hadamard.}
\label{fig:loss_curve}
\end{figure*}

\textbf{Downstream Performance.}
Table~\ref{tab:loss_downstream_combined} further shows that the convergence behavior is reflected in downstream performance under NVFP4 forward evaluation. On Qwen3-0.6B, vanilla NVFP4 drops the average downstream score by 1.26 points relative to BF16, and NVFP4-Hadamard reduces this gap to 0.98 points. Averis further narrows the gap to 0.89 points, while Averis-Hadamard achieves the smallest gap at 0.73 points. On Qwen3-7B-A1.5B, BF16 reaches an average score of 56.93, while vanilla NVFP4, NVFP4-Hadamard, and Averis obtain 55.56, 55.45, and 56.22, corresponding to average gaps of 1.37, 1.48, and 0.71 points, respectively. Averis thus gives the strongest 7B downstream result among the reported low-bit methods.

\textbf{Complementarity with Hadamard Transformation.}
Averis is orthogonal to Hadamard-based outlier smoothing. On Qwen3-0.6B, combining Averis with Hadamard transformation further reduces the loss gap from 1.19\% to 0.94\% and gives the best downstream average among FP4 variants. This suggests that Averis removes the dominant coherent mean-induced dynamic range, while Hadamard transformation can still provide additional element-space smoothing on the residual signal.

\begin{table*}[h]
\caption{Training loss and downstream performance comparison on Qwen3-0.6B and Qwen3-7B-A1.5B. Loss gap denotes the relative loss increase over BF16. Downstream results are reported in percentage points. BF16 is shown as the full-precision reference and separated from FP4 methods; bold numbers indicate the best result among FP4 methods within each model block.}
\label{tab:loss_downstream_combined}
\centering
\small
\setlength{\tabcolsep}{4pt}
\renewcommand{\arraystretch}{1.05}
\begin{tabular*}{\textwidth}{@{\extracolsep{\fill}}lccccccccc@{}}
\toprule
Method & Loss & Loss Gap & ArcC & ArcE & HellaSwag & LAMBADA & PIQA & RACE & Avg \\
\midrule
\multicolumn{10}{c}{\textbf{Qwen3-0.6B}} \\
\midrule
BF16
& 2.780 & --
& 28.92 & 53.37 & 49.83 & 47.76 & 71.00 & 55.31 & 51.03 \\
\cmidrule(lr){1-10}
NVFP4
& 2.855 & 2.70\%
& 27.73 & 52.48 & 48.15 & 44.93 & 69.80 & \textbf{55.55} & 49.77 \\
NVFP4-Hadamard
& 2.837 & 2.05\%
& 27.56 & 52.99 & \textbf{48.51} & 45.78 & \textbf{70.02} & 55.45 & 50.05 \\
Averis
& 2.813 & 1.19\%
& \textbf{29.44} & \textbf{53.66} & 48.15 & 44.44 & 69.64 & 55.49 & 50.14 \\
Averis-Hadamard
& \textbf{2.806} & \textbf{0.94\%}
& 28.92 & 52.86 & 48.43 & \textbf{46.96} & \textbf{70.02} & 54.60 & \textbf{50.30} \\
\midrule
\multicolumn{10}{c}{\textbf{Qwen3-7B-A1.5B}} \\
\midrule
BF16
& 2.824 & --
& 35.75 & 69.91 & 61.12 & 49.31 & 74.81 & 50.65 & 56.93 \\
\cmidrule(lr){1-10}
NVFP4
& 2.853 & 1.03\%
& 33.24 & 69.23 & 60.56 & 46.68 & 73.26 & 50.41 & 55.56 \\
NVFP4-Hadamard
& 2.855 & 1.10\%
& 33.52 & 69.32 & 60.22 & 46.58 & 72.34 & \textbf{50.71} & 55.45 \\
Averis
& \textbf{2.847} & \textbf{0.81\%}
& \textbf{34.88} & \textbf{69.70} & \textbf{60.75} & \textbf{48.58} & \textbf{74.13} & 49.27 & \textbf{56.22} \\
\bottomrule
\end{tabular*}
\end{table*}


\textbf{Runtime overhead comparison.}
We benchmark the preprocessing overhead of Averis against tiled Hadamard transformation. Tiled Hadamard reshapes $\mathbf{X}$ into $[l,m/16,16]$ and applies a $16\times16$ Hadamard transform along the last dimension, whereas Averis only computes the feature-wise mean of $\mathbf{X}$. As shown in Table~\ref{tab:averis_hadamard_overhead}, Averis is consistently faster on large activation shapes, achieving mean-latency speedups of $4.47\times$ and $4.72\times$ for $(l,m)=(512 \times 2048,4096)$ and $(512 \times 2048,8192)$, respectively. This advantage increases with activation size.


\begin{table*}[t]
\centering
\caption{Runtime overhead comparison between tiled Hadamard preprocessing and Averis mean extraction on large activation shapes. Latencies are reported in milliseconds. Speedup is computed from mean latency as $T_{\mathrm{Hadamard}}/T_{\mathrm{Averis}}$.}
\label{tab:averis_hadamard_overhead}
\small
\setlength{\tabcolsep}{5pt}
\begin{tabular*}{\textwidth}{@{\extracolsep{\fill}}llccc@{}}
\toprule
Shape $(l,m)$ & Method & Mean & Std & Speedup \\
\midrule
$(512 \times 2048,4096)$ & Tiled Hadamard & 9.1614 & 0.0128 & -- \\
$(512 \times 2048,4096)$ & Averis & 2.0494 & 0.0021 & $4.47\times$ \\
\midrule
$(512 \times 2048,8192)$ & Tiled Hadamard & 18.8421 & 0.0103 & -- \\
$(512 \times 2048,8192)$ & Averis & 3.9927 & 0.0028 & $4.72\times$ \\
\bottomrule
\end{tabular*}
\end{table*}

\textbf{End-to-end overhead on Blackwell.}
We further measure end-to-end training-step latency on a Blackwell GPU for both Qwen3-0.6B and Qwen3-7B-A1.5B. Table~\ref{tab:blackwell_overhead} reports the corresponding step times for vanilla NVFP4,{Averis}, and NVFP4-Hadamard. On Qwen3-0.6B, {Averis} increases end-to-end latency by only \(2.01\%\) relative to vanilla NVFP4, with an overhead that is \(29.6\%\) of the overhead of NVFP4-Hadamard. On Qwen3-7B-A1.5B, {Averis} incurs \(2.20\%\) end-to-end overhead, which is \(28.9\%\) of the overhead of tiled Hadamard. These results show that the low arithmetic overhead of {Averis} carries over to full-model training on modern Blackwell hardware across both model scales.

\begin{table}[h]
\centering
\caption{End-to-end training-step latency on Blackwell GPUs for Qwen3-0.6B and Qwen3-7B-A1.5B.}
\label{tab:blackwell_overhead}
\small
\begin{tabular}{lcccc}
\toprule
Setting & Qwen3-0.6B (ms) & 0.6B Overhead & Qwen3-7B-A1.5B (ms) & 7B Overhead \\
\midrule
NVFP4 & 5545.30 & -- & 24852.04 & -- \\
Averis & 5656.80 & 2.01\% & 25398.78 & 2.20\% \\
NVFP4-Hadamard & 5922.38 & 6.80\% & 26744.92 & 7.62\% \\
\bottomrule
\end{tabular}
\end{table}

\section{Related Work}

\subsection{Outlier Suppression and Low-Bit Quantization}

Outlier dimensions and extreme activation magnitudes have been a central
challenge in low-bit quantization of LLMs.
SmoothQuant~\cite{xiao2023smoothquant} redistributes activation and weight
scales to reduce outlier concentration.
GPTQ~\cite{frantar2022gptq}, AWQ~\cite{lin2024awq}, QLoRA~\cite{dettmers2023qlora},
and SVDQuant~\cite{li2024svdquant}
propose various strategies to absorb or compensate for large-magnitude
components.
QuaRot~\cite{ashkboos2024quarot} and HALO~\cite{ashkboos2025halo}
employ orthogonal rotations or Hadamard transforms to mitigate outliers.
Outlier Suppression~\cite{wei2023outlier} explicitly targets extreme
activation dimensions.
Recent work on FP8 and FP4 training further highlights the sensitivity of
low-precision regimes to activation dynamic range
\cite{micikevicius2022fp8,peng2023fp8,perez2023fp8,cao2025metis}.

\section{Conclusion}
\label{sec:conclusion}

FP4 training is attractive for efficient large language model training, but remains fragile because activation outliers inflate blockwise quantization scales and compress long-tail signals. This work identifies a coherent rank-one mean bias as a structural source of these outliers. The feature-wise mean aligns with the leading anisotropic spectral component, grows training time and increasingly dominates extreme activation magnitudes.

Motivated by this observation, we propose \emph{Averis}, a mean--residual splitting quantization method that isolates the coherent mean component before FP4 quantization of activations and output gradients. Averis directly reduces bias-induced dynamic-range inflation using only mean reductions and elementwise subtractions. Experiments on dense and MoE Qwen3 models show that Averis narrows the BF16 loss and downstream accuracy gaps, complements Hadamard transformation, and incurs only 2.2\% end-to-end overhead on Blackwell GPUs.

\textbf{Limitations.} First, due to computational constraints, our long-run training experiments are performed with FP4 simulation rather than directly executed end-to-end on Blackwell GPUs. Second, broader validation across more model scales, data distributions, and architectural variants remains future work. Third, we observe that the mean-bias phenomenon is less pronounced in output gradients than in activations; we therefore do not provide a dedicated ablation or specialized design for gradient-output mean bias, leaving more adaptive backward-pass treatments for future exploration.

\newpage


\newpage

\appendix

\section{Additional Mean-Bias Visualizations Across Layers and Training Stages}
\label{app:mean_bias_more_layers}

To verify that the pattern shown in Figure~\ref{fig:mean_bias_definition} is
not an isolated late-stage layer-27 artifact, we examine the same three-panel
diagnostic across additional layers and checkpoints. Figures
~\ref{fig:mean_bias_appendix_layer0_10k},
\ref{fig:mean_bias_appendix_layer0_170k}, and
\ref{fig:mean_bias_appendix_layer27_170k} report three representative cases:
layer 0 at 10k, layer 0 at 170k, and layer 27 at 170k. In all three cases, the
same qualitative structure persists: the singular spectrum exhibits a leading
spike, token-wise projections onto the mean direction remain predominantly
one-sided, and the mean vector is most strongly aligned with the top right
singular vector.

These supplementary examples highlight two aspects of generality. Across
training, the phenomenon is already visible in early checkpoints and becomes
stronger at later stages, as seen by comparing
Figures~\ref{fig:mean_bias_appendix_layer0_10k} and
\ref{fig:mean_bias_appendix_layer0_170k}. Across depth, the effect appears in
both shallow and deep layers, while becoming especially sharp in the deep
late-stage activation shown in Figure~\ref{fig:mean_bias_appendix_layer27_170k}.
Taken together, these observations show that mean bias is not a single-layer
artifact but a robust structural feature of LLM activations.

\begin{figure}[h]
\centering
\includegraphics[width=\linewidth]{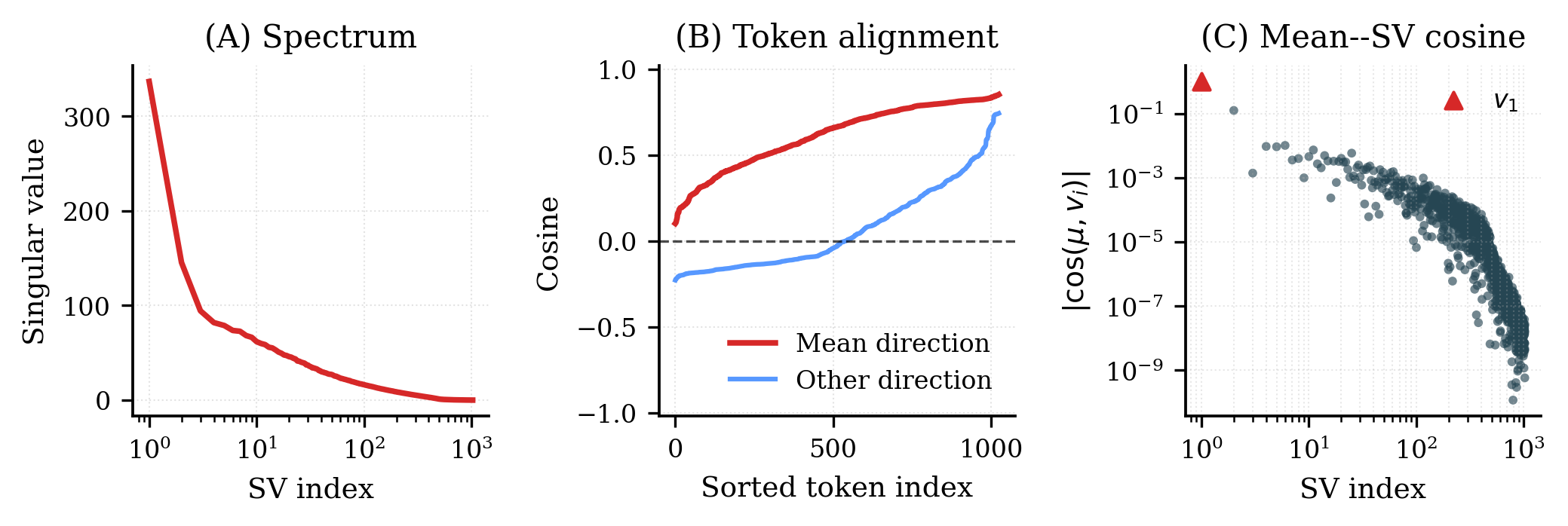}
\caption{Additional mean-bias visualization for layer 0 at training step 10k.
The leading spectral spike, predominantly one-sided token alignment to the mean
direction, and strong mean--top-singular-vector alignment are already visible at
an early checkpoint.}
\label{fig:mean_bias_appendix_layer0_10k}
\end{figure}

\begin{figure}[h]
\centering
\includegraphics[width=\linewidth]{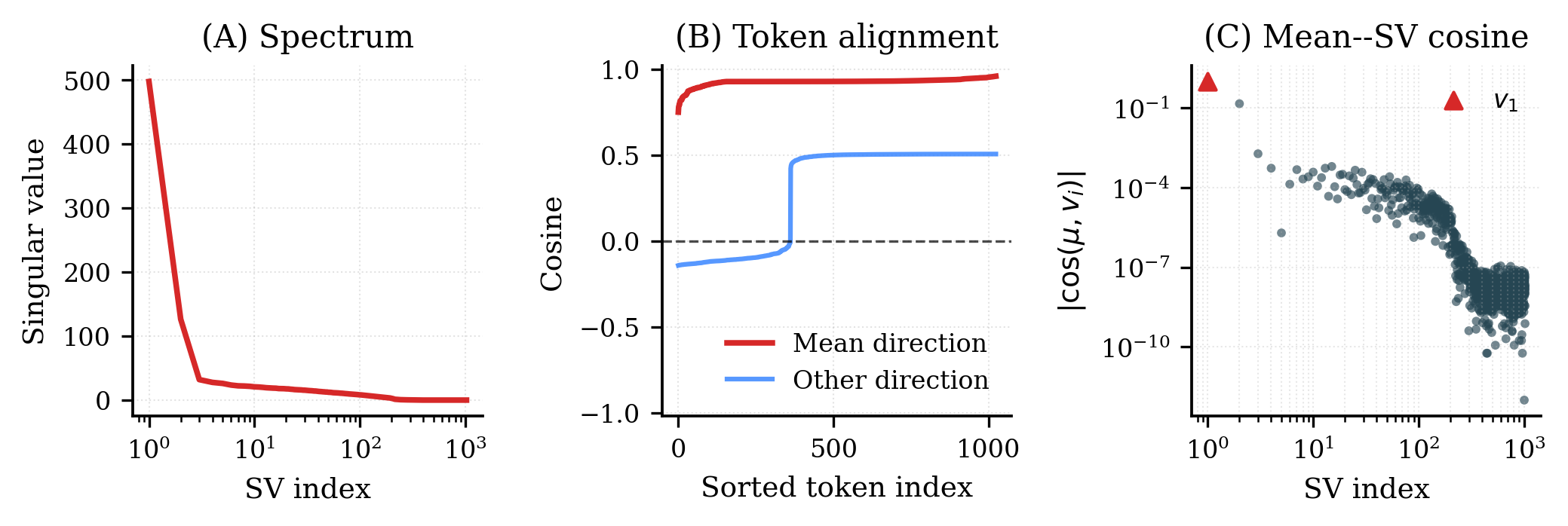}
\caption{Additional mean-bias visualization for layer 0 at training step 170k.
Compared with the early checkpoint, the same qualitative structure persists and
the coherence of the mean component becomes stronger.}
\label{fig:mean_bias_appendix_layer0_170k}
\end{figure}

\begin{figure}[h]
\centering
\includegraphics[width=\linewidth]{figures/layer_27_170k_ffn_input_residual_three_panel_clean.png}
\caption{Additional mean-bias visualization for layer 27 at training step 170k.
The phenomenon is especially pronounced in this deep late-stage activation,
matching the strongest regime discussed in the main text.}
\label{fig:mean_bias_appendix_layer27_170k}
\end{figure}

\section{Empirical Validation of the Diagonal Variance Approximation}
\label{app:variance_decomp}

We empirically validate Assumption~2 by comparing the row-sampling residual
variance of each column with its diagonal spectral approximation
$\tau_{j,\mathrm{diag}}^2$. Figure~\ref{fig:variance_decomp} shows that the
diagonal estimate closely tracks the empirical residual variance, while the
relative contribution of cross-mode covariance terms remains small. Quantitatively,
the cross-term contribution has median $0.006$ and $95$-th percentile $0.036$,
supporting the diagonal approximation used in the main-text tail model.

\begin{figure}[h]
\centering
\includegraphics[width=\linewidth]{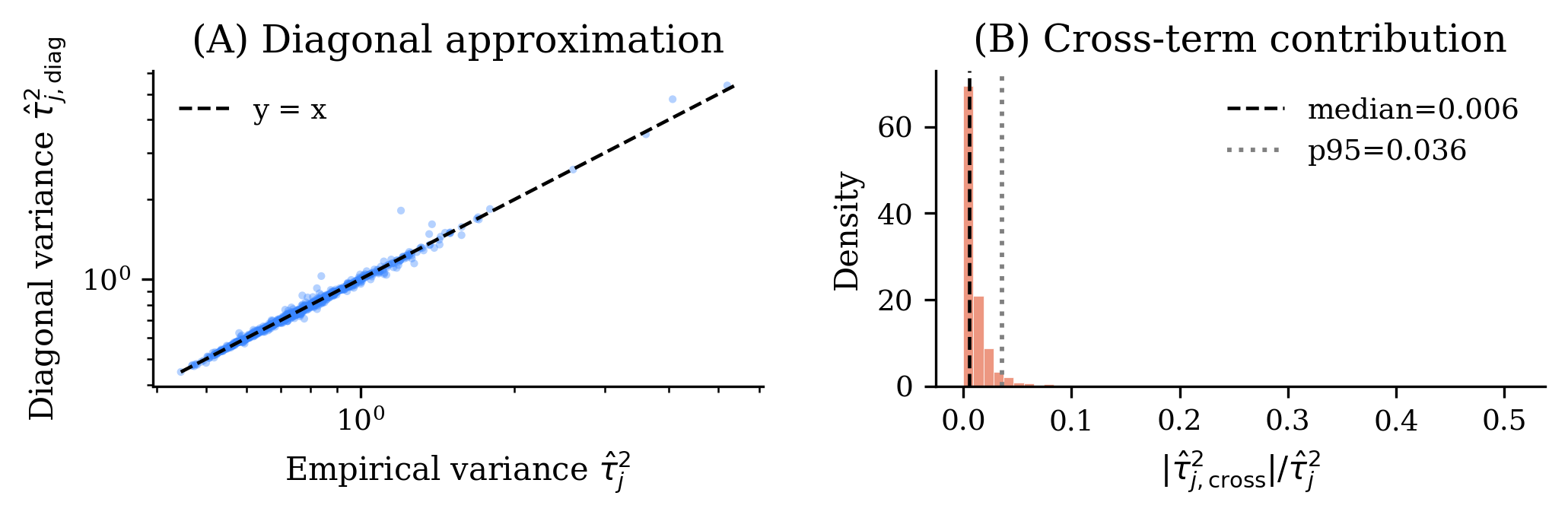}
\caption{Empirical validation of the diagonal variance approximation. The
diagonal spectral estimate closely matches the empirical residual variance, and
cross-mode covariance terms contribute little to the total variance.}
\label{fig:variance_decomp}
\end{figure}

\section{Residual Tail Contraction After Mean Removal}
\label{app:residual_outlier_distribution}

To support the tail-amplification mechanism in Theorem~\ref{thm:mean_bias_tail},
we directly compare the empirical value distributions of raw activations and
their mean-centered residuals. Figures~\ref{fig:raw_residual_layer0_170k} and
\ref{fig:raw_residual_layer27_170k} show representative late-stage examples
from a shallow layer (layer 0) and a deep layer (layer 27), both at the 170k
checkpoint. In both cases, subtracting the feature-wise mean visibly contracts
the high-magnitude tail of the distribution: the residual histogram is more
concentrated near zero and exhibits substantially fewer extreme values than the
raw activation histogram.

This pattern is consistent with the main-text interpretation. A coherent mean
shift contributes directly to extreme entrywise magnitudes by translating the
columnwise distribution away from zero. Once this mean component is removed, the
remaining residual fluctuations are much less heavy-tailed, especially in the
far-tail regime most relevant to blockwise quantization scales. These examples
therefore provide direct empirical support for the claim that mean removal
reduces the outlier population that drives FP4 dynamic-range inflation.

\begin{figure*}[h]
\centering
\begin{subfigure}{0.49\textwidth}
\centering
\includegraphics[width=\linewidth]{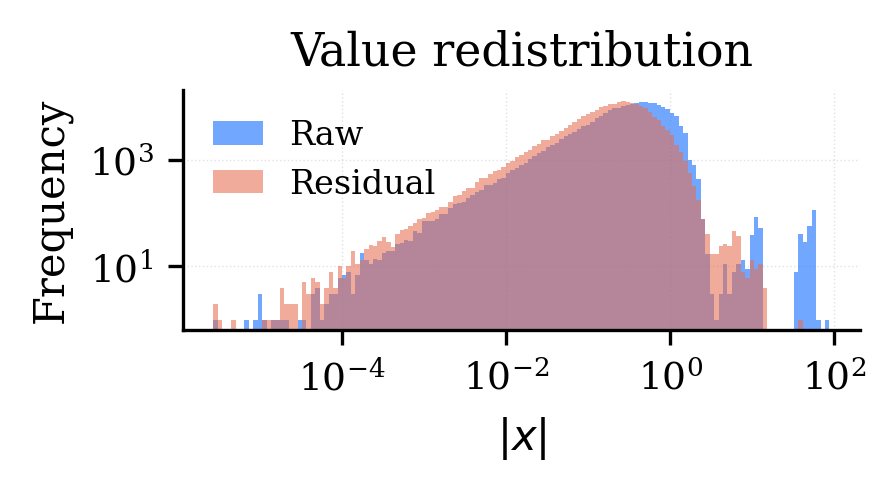}
\caption{Layer 0 at training step 170k. After subtracting the feature-wise
mean, the residual distribution is more concentrated and exhibits a
substantially weaker high-magnitude tail.}
\label{fig:raw_residual_layer0_170k}
\end{subfigure}\hfill
\begin{subfigure}{0.49\textwidth}
\centering
\includegraphics[width=\linewidth]{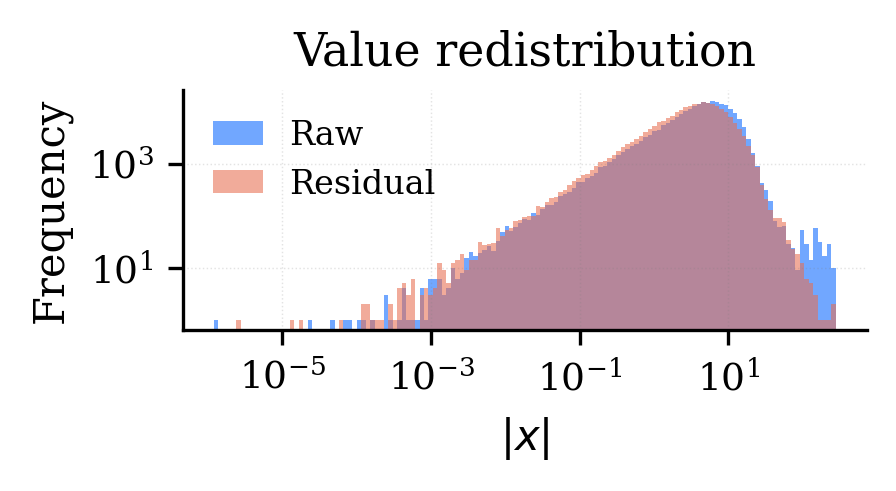}
\caption{Layer 27 at training step 170k. The contraction of the tail after
mean removal is even more pronounced in this deep late-stage activation, where
mean bias is strongest.}
\label{fig:raw_residual_layer27_170k}
\end{subfigure}
\caption{Raw-versus-residual value distributions at the 170k checkpoint for a
shallow layer and a deep layer. In both cases, subtracting the feature-wise
mean contracts the high-magnitude tail, with the effect becoming especially
pronounced in the deep late-stage activation.}
\end{figure*}

\section{Output-Gradient Mean Centering}
\label{app:gradout_mean_centering}

Although output gradients exhibit a much weaker mean-bias structure than
activations, mean centering still provides a small but consistent quantization
benefit. Figure~\ref{fig:gradout_three_panel} shows a representative three-panel
diagnostic for an output-gradient matrix. Compared with the activation examples
in the main text, the leading spectral spike is weaker, token-wise alignment to
the mean direction is less one-sided, and the mean vector is less sharply
aligned with the top right singular vector. In this sense, mean bias is not as
pronounced for output gradients as it is for activations.

Nevertheless, quantization results still show a slight benefit from mean
centering. Under NVFP4 quantization, the relative quantization error decreases
from $13.6\%$ to $13.5\%$ after centering the output-gradient matrix. This gain
is modest, but it is directionally consistent with the Averis design: even when
the coherent mean component is weak, isolating it can still slightly improve
low-bit numerical fidelity.

\begin{figure}[h]
\centering
\includegraphics[width=\linewidth]{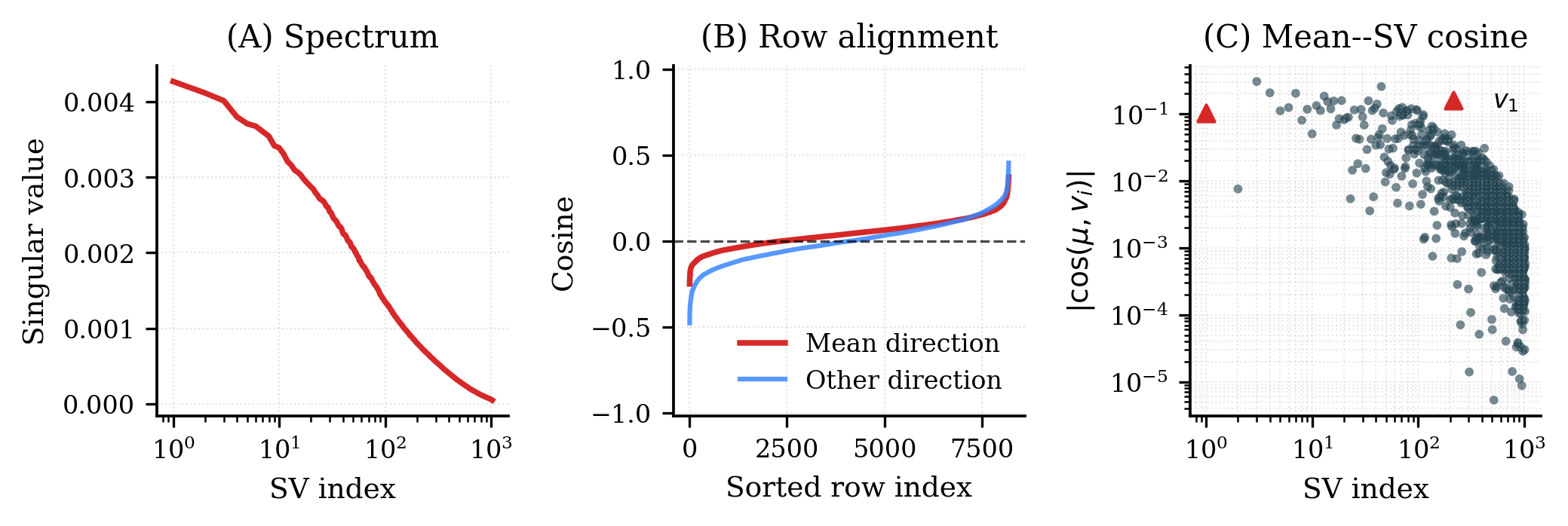}
\caption{Three-panel visualization of a representative output-gradient matrix.
Compared with activations, the mean-bias structure is weaker: spectral
dominance is milder, token-wise alignment to the mean direction is less
one-sided, and the mean vector is less sharply aligned with the top right
singular vector. Even so, mean centering still yields a slight reduction in
NVFP4 quantization error.}
\label{fig:gradout_three_panel}
\end{figure}

\section{Proof of Theorem~\ref{thm:mean_bias_tail}}
\label{appendix:proof_mean_bias_tail}

\begin{proof}
We prove the theorem in several steps. Throughout the proof, probabilities are
understood under the marginal row-coordinate model stated in the theorem,
conditional on the spectral quantities entering $m_j$ and $\tau_j$. For notational
simplicity, we keep the conditioning implicit when no ambiguity arises.

\paragraph{Step 1: Exact spectral expression for the column mean.}
By the singular value decomposition of $\mathbf X$,
\[
X_{ij}
=
\sum_{k=1}^{r}\sigma_k u_{ki}v_{kj}.
\]
Averaging over the row index $i$ gives
\[
m_j
=
\frac{1}{l}\sum_{i=1}^{l}X_{ij}
=
\frac{1}{l}
\sum_{i=1}^{l}
\sum_{k=1}^{r}
\sigma_k u_{ki}v_{kj}.
\]
Interchanging the summations yields
\[
m_j
=
\sum_{k=1}^{r}
\sigma_k v_{kj}
\left(
\frac{1}{l}\sum_{i=1}^{l}u_{ki}
\right).
\]
Since
\[
\beta_k
=
\left\langle
\mathbf u_k,\frac{\mathbf 1}{\sqrt l}
\right\rangle
=
\frac{1}{\sqrt l}\sum_{i=1}^{l}u_{ki},
\]
we have
\[
\frac{1}{l}\sum_{i=1}^{l}u_{ki}
=
\frac{\beta_k}{\sqrt l}.
\]
Therefore,
\[
m_j
=
\frac{1}{\sqrt l}
\sum_{k=1}^{r}
\sigma_k\beta_k v_{kj},
\]
which is the column-mean expansion used in the theorem. The leading-mode
contribution is correspondingly
\[
m_j^{(1)}
=
\frac{\sigma_1\beta_1}{\sqrt l}v_{1j}.
\]

\paragraph{Step 2: Exact mean--fluctuation decomposition of the row-sampled entry.}
Let $I\sim\mathrm{Unif}(\{1,\dots,l\})$ and define
\[
Y_j=X_{Ij}.
\]
Using the SVD again,
\[
Y_j
=
\sum_{k=1}^{r}
\sigma_k u_{kI}v_{kj}.
\]
For each singular mode, add and subtract its row-average contribution
$\beta_k/\sqrt l$:
\[
Y_j
=
\sum_{k=1}^{r}
\sigma_k
\left(
\frac{\beta_k}{\sqrt l}
+
u_{kI}-\frac{\beta_k}{\sqrt l}
\right)
v_{kj}.
\]
Splitting the two terms gives
\[
Y_j
=
\underbrace{
\frac{1}{\sqrt l}
\sum_{k=1}^{r}
\sigma_k\beta_k v_{kj}
}_{m_j}
+
\underbrace{
\sum_{k=1}^{r}
\sigma_k
\left(
u_{kI}-\frac{\beta_k}{\sqrt l}
\right)
v_{kj}
}_{\eta_{Ij}}.
\]
Hence
\[
Y_j=m_j+\eta_{Ij}.
\]
Moreover, under the row-sampling distribution,
\[
\mathbb E_I[\eta_{Ij}]
=
\mathbb E_I[Y_j]-m_j
=
\frac{1}{l}\sum_{i=1}^{l}X_{ij}-m_j
=
0.
\]
Thus $\eta_{Ij}$ is a centered row-coordinate fluctuation.

\paragraph{Step 3: Residual variance scale and marginal Gaussian approximation.}
The theorem assumes a marginal Gaussian approximation for the row-sampled
fluctuation:
\[
\eta_{Ij}\,\dot\sim\,N(0,\tau_j^2).
\]
This is a marginal assumption on the randomly sampled coordinate $\eta_{Ij}$.
It does not require the row residuals
$\{\eta_{ij}\}_{i=1}^{l}$ to be mutually independent.

The variance scale $\tau_j^2$ is obtained by keeping the diagonal mode-wise
variance contributions. For a single singular mode,
\[
\mathbb E_I
\left[
\left(
u_{kI}-\frac{\beta_k}{\sqrt l}
\right)^2
\right]
=
\frac{1}{l}\sum_{i=1}^{l}u_{ki}^2
-
\left(
\frac{1}{l}\sum_{i=1}^{l}u_{ki}
\right)^2.
\]
Since $\|\mathbf u_k\|_2=1$ and
\[
\frac{1}{l}\sum_{i=1}^{l}u_{ki}
=
\frac{\beta_k}{\sqrt l},
\]
we obtain
\[
\mathbb E_I
\left[
\left(
u_{kI}-\frac{\beta_k}{\sqrt l}
\right)^2
\right]
=
\frac{1}{l}
-
\frac{\beta_k^2}{l}
=
\frac{1-\beta_k^2}{l}.
\]
Therefore, the diagonal contribution of mode $k$ to the row-sampling variance of
$\eta_{Ij}$ is
\[
\frac{1}{l}
\sigma_k^2(1-\beta_k^2)v_{kj}^2.
\]
Neglecting cross-mode covariance terms gives the variance approximation
\[
\tau_j^2
=
\frac{1}{l}
\sum_{k=1}^{r}
\sigma_k^2(1-\beta_k^2)v_{kj}^2.
\]
Under the marginal Gaussian row-coordinate approximation,
\[
Y_j=m_j+\eta_{Ij}
\,\dot\sim\,
N(m_j,\tau_j^2).
\]

\paragraph{Step 4: Two-sided Gaussian tail formula.}
Let
\[
Z_j
=
\frac{Y_j-m_j}{\tau_j}.
\]
Under the marginal Gaussian approximation, $Z_j\dot\sim N(0,1)$. For any
threshold $t>0$,
\[
\mathbb P(|Y_j|>t)
=
\mathbb P(Y_j>t)
+
\mathbb P(Y_j<-t).
\]
The first term is
\[
\mathbb P(Y_j>t)
=
\mathbb P\left(
Z_j>\frac{t-m_j}{\tau_j}
\right)
=
Q\left(
\frac{t-m_j}{\tau_j}
\right).
\]
The second term is
\[
\mathbb P(Y_j<-t)
=
\mathbb P\left(
Z_j<\frac{-t-m_j}{\tau_j}
\right)
=
\Phi\left(
\frac{-t-m_j}{\tau_j}
\right)
=
Q\left(
\frac{t+m_j}{\tau_j}
\right).
\]
Since the two-sided tail depends on the sign of $m_j$ only through $|m_j|$, we
can write
\[
\mathbb P(|Y_j|>t)
\approx
Q\left(
\frac{t-|m_j|}{\tau_j}
\right)
+
Q\left(
\frac{t+|m_j|}{\tau_j}
\right).
\]
This proves~\eqref{eq:tail_formula}.

\paragraph{Step 5: Far-tail one-sided dominance.}
Let
\[
s_j\triangleq |m_j|,
\qquad
a_j\triangleq \frac{t-s_j}{\tau_j},
\qquad
b_j\triangleq \frac{t+s_j}{\tau_j}.
\]
Under the far-tail conditions in~\eqref{eq:far_tail_conditions}, we have
\[
a_j\to+\infty,
\qquad
\frac{ts_j}{\tau_j^2}\to\infty.
\]
Mills' ratio gives
\[
Q(x)
\sim
\frac{1}{x\sqrt{2\pi}}e^{-x^2/2},
\qquad x\to+\infty.
\]
Therefore,
\[
\frac{Q(b_j)}{Q(a_j)}
\sim
\frac{a_j}{b_j}
\exp\left(
-\frac{b_j^2-a_j^2}{2}
\right).
\]
Since
\[
b_j^2-a_j^2
=
\frac{(t+s_j)^2-(t-s_j)^2}{\tau_j^2}
=
\frac{4ts_j}{\tau_j^2},
\]
we have
\[
\frac{Q(b_j)}{Q(a_j)}
\sim
\frac{t-s_j}{t+s_j}
\exp\left(
-\frac{2ts_j}{\tau_j^2}
\right).
\]
The condition $ts_j/\tau_j^2\to\infty$ implies
\[
\frac{Q(b_j)}{Q(a_j)}\to 0.
\]
Hence
\[
Q\left(
\frac{t+|m_j|}{\tau_j}
\right)
=
o\left(
Q\left(
\frac{t-|m_j|}{\tau_j}
\right)
\right).
\]
Combining this with the two-sided tail formula gives
\[
\mathbb P(|Y_j|>t)
\sim
Q\left(
\frac{t-|m_j|}{\tau_j}
\right),
\]
which proves~\eqref{eq:tail_asymptotic}.

\paragraph{Step 6: Tail amplification relative to the variance-only baseline.}
Consider the variance-only baseline
\[
Y_j^{(0)}\sim N(0,\tau_j^2).
\]
Its two-sided tail probability is
\[
\mathbb P(|Y_j^{(0)}|>t)
=
2Q(t/\tau_j).
\]
Using the far-tail approximation from Step 5,
\[
\frac{
\mathbb P(|Y_j|>t)
}{
2Q(t/\tau_j)
}
\approx
\frac{
Q((t-|m_j|)/\tau_j)
}{
2Q(t/\tau_j)
}.
\]
Applying Mills' ratio to the numerator gives
\[
Q\left(
\frac{t-|m_j|}{\tau_j}
\right)
\approx
\frac{\tau_j}{(t-|m_j|)\sqrt{2\pi}}
\exp\left(
-\frac{(t-|m_j|)^2}{2\tau_j^2}
\right),
\]
while applying it to the baseline tail gives
\[
2Q\left(
\frac{t}{\tau_j}
\right)
\approx
\frac{2\tau_j}{t\sqrt{2\pi}}
\exp\left(
-\frac{t^2}{2\tau_j^2}
\right).
\]
Taking the ratio yields
\[
\frac{
\mathbb P(|Y_j|>t)
}{
2Q(t/\tau_j)
}
\approx
\frac{t}{2(t-|m_j|)}
\exp\left(
\frac{
t^2-(t-|m_j|)^2
}{
2\tau_j^2}
\right).
\]
Since
\[
t^2-(t-|m_j|)^2
=
2t|m_j|-m_j^2,
\]
we obtain
\[
\frac{
\mathbb P(|Y_j|>t)
}{
2Q(t/\tau_j)
}
\approx
\frac{t}{2(t-|m_j|)}
\exp\left(
\frac{2t|m_j|-m_j^2}{2\tau_j^2}
\right).
\]
This proves~\eqref{eq:tail_ratio}.

\paragraph{Step 7: From mean dominance to mean-driven tail amplification.}
Assumption~\textnormal{(A2)} states that
\[
\frac{|m_j|}{\tau_j}\gg 1.
\]
In the far-tail regime $t>|m_j|$, the exponent in~\eqref{eq:tail_ratio}
satisfies
\[
\frac{2t|m_j|-m_j^2}{2\tau_j^2}
=
\frac{|m_j|(2t-|m_j|)}{2\tau_j^2}
>
\frac{m_j^2}{2\tau_j^2}.
\]
Thus, when $|m_j|/\tau_j\gg 1$, the exponential factor in~\eqref{eq:tail_ratio}
is large. Consequently, the high-threshold exceedance probability in column $j$
is exponentially amplified by the coherent mean shift relative to a zero-mean
variance-only fluctuation with the same variance scale.

\paragraph{Step 8: Attribution to the aligned leading singular mode.}
Assumption~\textnormal{(A3)} states that
\[
|m_j-m_j^{(1)}|\ll |m_j^{(1)}|.
\]
Therefore,
\[
m_j
=
m_j^{(1)}+o(|m_j^{(1)}|)
=
\frac{\sigma_1\beta_1}{\sqrt l}v_{1j}
+
o\left(
\left|
\frac{\sigma_1\beta_1}{\sqrt l}v_{1j}
\right|
\right).
\]
Hence the coherent mean shift responsible for the tail amplification is
asymptotically the contribution of the aligned leading singular mode. Across all
rows, this leading mean contribution corresponds to the rank-one matrix
\[
\frac{\sigma_1\beta_1}{\sqrt l}\,
\mathbf 1\mathbf v_1^\top.
\]
Therefore, under the aligned-spike dominance condition, the mean-driven
far-tail amplification is controlled specifically by the rank-one mean-bias
component. This completes the proof.
\end{proof}

\newpage
\section*{NeurIPS Paper Checklist}

\begin{enumerate}

\item {\bf Claims}
\item[] Question: Do the main claims made in the abstract and introduction accurately reflect the paper's contributions and scope?
\item[] Answer: \answerYes{}
\item[] Justification: The abstract and introduction clearly state the core claims: identifying mean bias as the dominant source of activation outliers, providing theoretical analysis of its amplification effect (Section 2.3), and proposing the Averis method (Section 3). These claims are supported by empirical validation (Section 4).

\item {\bf Limitations}
    \item[] Question: Does the paper discuss the limitations of the work performed by the authors?
    \item[] Answer: \answerYes{}.
    \item[] Justification: We discuss the limitations in the dedicated limitations paragraph. In particular, we note that our long-run experiments rely on FP4 simulation rather than full end-to-end Blackwell execution, that broader validation across model scales, data distributions, and architectures remains future work, and that gradient-output mean bias is less pronounced than activation mean bias and is therefore not separately specialized in this work.

\item {\bf Theory assumptions and proofs}
\item[] Answer: \answerYes{}
\item[] Justification: All theoretical results are presented with explicit assumptions (Section 2.3, Assumptions 1–3), and complete proofs are provided in Appendix E. The derivations are clearly structured and cross-referenced.

\item {\bf Experimental result reproducibility}
\item[] Answer: \answerYes{}
\item[] Justification: The paper provides sufficient details to reproduce experiments, including model configurations (Qwen3-0.6B and Qwen3-7B-A1.5B), dataset (DCLM), training token budgets, quantization setup (W4A4G4 NVFP4), and evaluation protocols (Section 4).

\item {\bf Open access to data and code}
\item[] Answer: \answerYes{}
\item[] Justification: An anonymized code repository is provided at submission time (see abstract), allowing reviewers to reproduce the results while preserving anonymity.

\item {\bf Experimental setting/details}
\item[] Answer: \answerYes{}
\item[] Justification: The paper specifies all relevant training and evaluation details, including datasets, model architectures, quantization schemes, baselines, and evaluation metrics (Section 4).

\item {\bf Experiment statistical significance}
\item[] Answer: \answerNo{}
\item[] Justification: The training results are based on single training runs without reporting error bars, due to the high computational cost of large-scale LLM training.

\item {\bf Experiments compute resources}
\item[] Answer: \answerYes{}
\item[] Justification: The paper specifies the hardware platforms used (Hopper GPUs for FP4 simulation and Blackwell GPUs for runtime evaluation) and reports end-to-end latency and overhead measurements (Section 4.1).

\item {\bf Code of ethics}
\item[] Answer: \answerYes{}
\item[] Justification: The research complies with the NeurIPS Code of Ethics and focuses on improving training efficiency and stability without involving sensitive data or human subjects.

\item {\bf Broader impacts}
\item[] Answer: \answerYes{}
\item[] Justification: The proposed method improves the efficiency and stability of LLM training, which can reduce computational cost and energy consumption. However, improving training efficiency may also facilitate broader deployment of large models, potentially amplifying risks such as misuse in content generation.

\item {\bf Safeguards}
\item[] Answer: \answerNA{}
\item[] Justification: The paper does not release new high-risk models or datasets that require safeguards.

\item {\bf Licenses for existing assets}
\item[] Answer: \answerNo{}
\item[] Justification: While the paper uses publicly available datasets and models (e.g., DCLM and Qwen3), explicit license information is not discussed.

\item {\bf New assets}
\item[] Answer: \answerNo{}
\item[] Justification: The paper does not introduce new datasets or benchmarks; it proposes a method and provides implementation.

\item {\bf Crowdsourcing and research with human subjects}
\item[] Answer: \answerNA{}
\item[] Justification: The paper does not involve human subjects or crowdsourcing.

\item {\bf Institutional review board (IRB) approvals or equivalent}
\item[] Answer: \answerNA{}
\item[] Justification: The paper does not involve human subjects.

\item {\bf Declaration of LLM usage}
\item[] Answer: \answerNA{}
\item[] Justification: LLMs are the subject of study and not used as a methodological component in developing the proposed approach.

\end{enumerate}

\end{document}